\documentclass[conference]{IEEEtran}
\usepackage{times}
\usepackage{subfigure}
\usepackage{graphicx}
\usepackage{amsmath,amssymb,amsthm}
\usepackage{mathtools}
\usepackage{xcolor}
\usepackage[ruled,vlined]{algorithm2e}

\newtheorem{proposition}{Proposition}
\newtheorem{remark}{Remark}

% numbers option provides compact numerical references in the text. 
\usepackage[numbers]{natbib}
\usepackage{multicol}
\usepackage{url}

\newcommand{\subparagraph}{}
\usepackage{titlesec}
\titlespacing*{\subsection}{0pt}{0.15\baselineskip}{0.05\baselineskip}
\titlespacing*{\section}{0pt}{0.6\baselineskip}{0.5\baselineskip}

\newcommand{\argmax}{\operatornamewithlimits{arg\,max}}

\pdfinfo{
   /Author (Zhangjie Cao, Erdem B\i y\i k, Woodrow Wang, Allan Raventos, Adrien Gaidon, Guy Rosman, Dorsa Sadigh)
   /Title  (Reinforcement Learning based Control of Imitative Policies for Near-Accident Driving)
   /CreationDate (D:20200528073600)
   /Subject (Autonomous Driving)
   /Keywords (autonomous driving;near-accidents;hybrid control;imitation learning;hierarchical reinforcement learning)
}

%\addbibresource{refs.bib}

\begin{document}
\abovedisplayskip=5pt
\abovedisplayshortskip=5pt
\belowdisplayskip=5pt
\belowdisplayshortskip=5pt

% paper title
\title{Reinforcement Learning based Control of\\Imitative Policies for Near-Accident Driving\vspace{-9px}}

\author{
\authorblockN{Zhangjie Cao$^*$$^1$, Erdem B\i y\i k$^*$$^2$, Woodrow Z. Wang$^1$,\\ Allan Raventos$^3$, Adrien Gaidon$^3$, Guy Rosman$^3$, Dorsa Sadigh$^{1,2}$}
\authorblockA{$^1$Computer Science, Stanford University, $^2$Electrical Engineering, Stanford University, $^3$Toyota Research Institute}
\authorblockA{Emails: \{caozj18, ebiyik, wwang153, dorsa\}@stanford.edu, \{allan.raventos, adrien.gaidon, guy.rosman\}@tri.global}
\authorblockA{$^*$ First two authors contributed equally to this work.\vspace{-12px}}}

% avoiding spaces at the end of the author lines is not a problem with
% conference papers because we don't use \thanks or \IEEEmembership

% for over three affiliations, or if they all won't fit within the width
% of the page, use this alternative format:
% 
%\author{\authorblockN{Michael Shell\authorrefmark{1},
%Homer Simpson\authorrefmark{2},
%James Kirk\authorrefmark{3}, 
%Montgomery Scott\authorrefmark{3} and
%Eldon Tyrell\authorrefmark{4}}
%\authorblockA{\authorrefmark{1}School of Electrical and Computer Engineering\\
%Georgia Institute of Technology,
%Atlanta, Georgia 30332--0250\\ Email: mshell@ece.gatech.edu}
%\authorblockA{\authorrefmark{2}Twentieth Century Fox, Springfield, USA\\
%Email: homer@thesimpsons.com}
%\authorblockA{\authorrefmark{3}Starfleet Academy, San Francisco, California 96678-2391\\
%Telephone: (800) 555--1212, Fax: (888) 555--1212}
%\authorblockA{\authorrefmark{4}Tyrell Inc., 123 Replicant Street, Los Angeles, California 90210--4321}}

\maketitle

\begin{abstract}
Autonomous driving has achieved significant progress in recent years, but autonomous cars are still unable to tackle high-risk situations where a potential accident is likely. In such near-accident scenarios, even a minor change in the vehicle's actions may result in drastically different consequences. To avoid unsafe actions in near-accident scenarios, we need to fully explore the environment. However, reinforcement learning (RL) and imitation learning (IL), two widely-used policy learning methods, cannot model rapid phase transitions and are not scalable to fully cover all the states. To address driving in near-accident scenarios, we propose a hierarchical reinforcement and imitation learning (\textsc{H-ReIL}) approach that consists of low-level policies learned by IL for discrete driving modes, and a high-level policy learned by RL that switches between different driving modes. Our approach exploits the advantages of both IL and RL by integrating them into a unified learning framework. Experimental results and user studies suggest our approach can achieve higher efficiency and safety compared to other methods. Analyses of the policies demonstrate our high-level policy appropriately switches between different low-level policies in near-accident driving situations.
\end{abstract}

\IEEEpeerreviewmaketitle

\section{Introduction}
Recent advances in learning models of human driving behavior have played a pivotal role in the development of autonomous vehicles. Although several milestones have been achieved (see \cite{pomerleau1989alvinn,bojarski2016end,amini2019variational,sadigh2016planning,sadigh2016information,sadigh2018planning,lee2017desire,biyik2018batch,codevilla2019exploring,kwon2020when,basu2019active,wulfmeier2017large} and references therein), the current autonomous vehicles still cannot make safe and efficient decisions when placed in a scenario where there can be a high risk of an accident
% where an accident is likely to occur 
(a near-accident scenario). For example, an autonomous vehicle needs to be able to coordinate with other cars on narrow roads, make unprotected left turns in busy intersections, yield to other cars in roundabouts, and merge into a highway in a short amount of time. The left panel of Fig. \ref{fig:arch} shows a typical near-accident scenario: The ego car (red) wants to make an unprotected left turn, but the red truck occludes the oncoming blue car, making the ego car fail to notice the blue car, which can potentially result in a collision. 
% Suboptimal decisions in such near-accident scenarios may take a long time to reach the destination or lead to accidents \cite{lubben2018self}. 
Clearly, making suboptimal decisions in such near-accident scenarios can be dangerous and costly, and is a limiting factor on the road to safe wide-scale deployment of autonomous vehicles.
% Near-accident scenarios are in general safety-critical, so making suboptimal decisions can be costly.
% Therefore, the wide deployment of autonomous vehicles requires ensuring safe and efficient control of vehicles in near-accident scenarios.

One major challenge when planning for autonomous vehicles in near-accident scenarios is the presence of \emph{phase transitions} in the car's policy.
% which are when small changes in the state space lead to dramatically different optimal actions. 
Phase transitions in autonomous driving occur when small changes in the critical states -- the ones we see in near-accident scenarios -- require dramatically different actions of the autonomous car to stay safe. 
For example, the speed of the blue car in Fig.~\ref{fig:arch} can determine the ego car's policy: if it slows down, the ego car can proceed forward and make the left turn; however, a small increase in its speed would require the ego car to stop and yield. The rapid phase transition requires a policy that can handle such non-smooth transitions. Due to the non-smooth value function, an action taken in one state may not generalize to nearby states. Hence, when training a policy, our algorithms must be able to visit and handle all the critical states individually, which can be computationally inefficient. 

% There are two main challenges that arise when planning for autonomous vehicles in near-accident scenarios. \textbf{1) Phase transitions in the policy:} Small changes in the critical states -- the ones we see in near-accident scenarios -- require dramatically different actions of the autonomous car to stay safe. 
% For example the speed of the blue car in Fig.~\ref{fig:arch} can determine the ego car's policy: if the oncoming blue car slows down, the ego car can make the left turn; however, a small increase in the blue car's speed would require the ego car to stop and yield. The rapid phase transition requires a policy that can handle such non-smooth transitions. \textbf{2) Full coverage of all states:} Due to the non-smooth value function, an action taken in one state may not generalize to nearby states. Therefore, a policy needs to visit and handle all the critical states individually, which can become computationally inefficient. 

% Therefore, in order to enable the wide deployment of autonomous vehicles, it is urgent to investigate how autonomous cars should trade off between efficiency, safety, comfort, or other desirable properties in near-accident scenarios.

% to drive efficiently and safely in near-accident scenarios

Reinforcement learning (RL) \cite{wulfmeier2017large,makantasis2019deep,sallab2017deep} and imitation learning (IL) \cite{pomerleau1989alvinn,bojarski2016end,amini2019variational,codevilla2018end,zhang2017query,bansal2018chauffeurnet,chen2015deepdriving,bojarski2017explaining,kuefler2017imitating,pan2017virtual,huang2019uncertainty,muller2018driving,paden2016survey} are two promising learning-based approaches for autonomous driving. RL explores the state-action space to find a policy that maximizes the reward signals while IL imitates the behavior of the agent from expert demonstrations. However, the presence of rapid phase transitions makes it hard for RL and IL to capture the policy because they learn a smooth policy across states. Furthermore, to achieve full coverage, RL needs to explore the full environment while IL requires a large amount of expert demonstrations covering all states. Both are prohibitive since the state-action space in driving is continuous and extremely large.

In this paper, \emph{our key insight is to model phase transitions as optimal switches, learned by reinforcement learning, between different modes of driving styles, each learned through imitation learning.}
In real world driving, various factors influence the behaviors of human drivers, such as efficiency (time to destination), safety (collision avoidance), etc. Different modes characterize different trade-offs of all factors. For example, the aggressive mode cares more about efficiency so it always drives fast in order to reach the destination in minimal time. The timid mode cares more about safety, so it usually drives at a mild speed and pays attention to all potential threats. Switching from one mode to another can model the rapid phase transition conditioned on the environment changes. %The policy for each mode is much simpler and can be learned through imitation learning, while selecting the mode is much lower dimensional than the general case of continuous actions, and the switching between the modes can efficiently be learned using RL. 

Using these modes, we propose a new algorithm \textbf{Hierarchical Reinforcement and Imitation Learning} (\textsc{H-ReIL}), which is composed of a high-level policy learned with RL that switches between different modes and low-level policies learned with IL, each of which represents a different mode. 

\begin{figure*}
    \centering
    \subfigure{
\includegraphics[width=.24\textwidth]{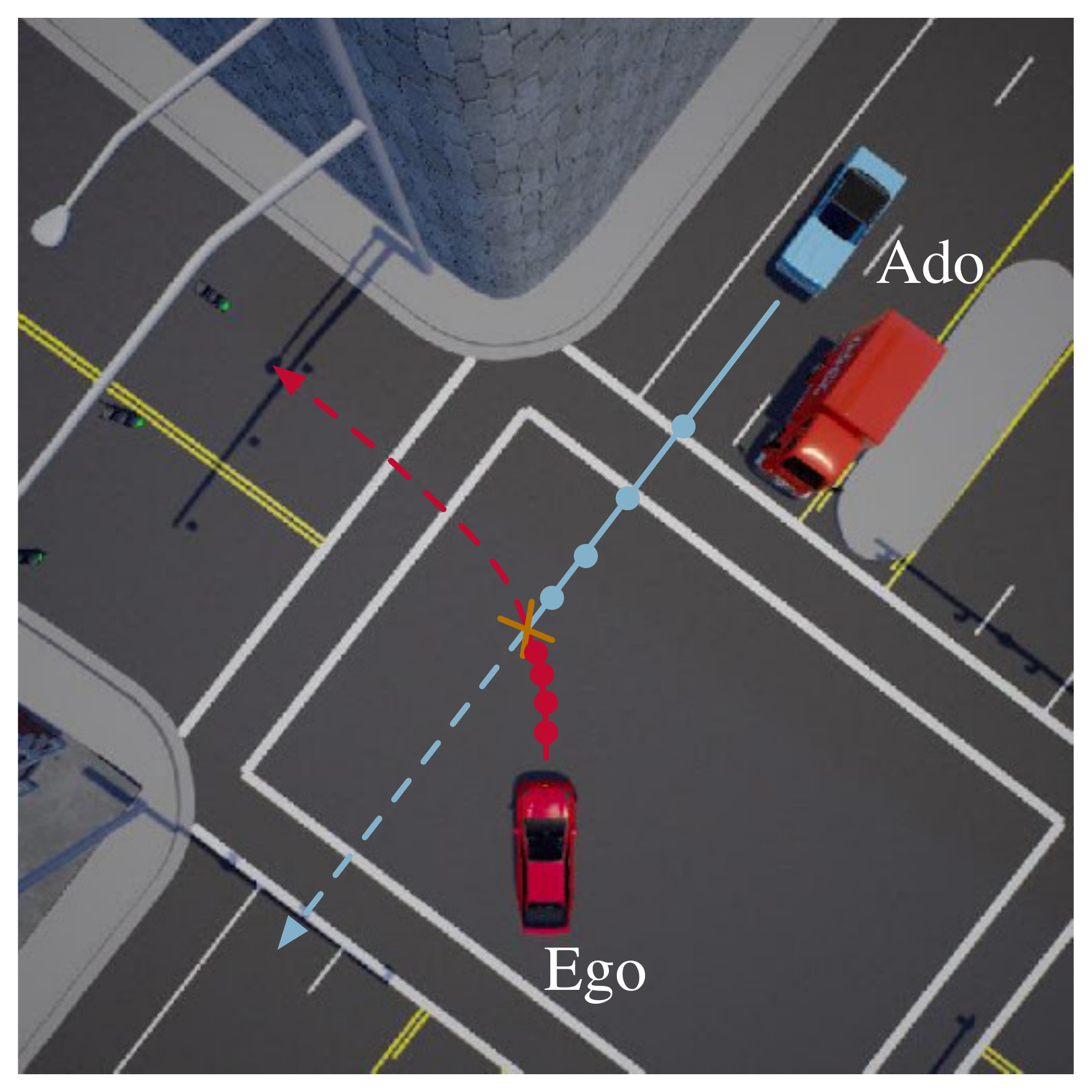}}
    \subfigure{
\includegraphics[width=.74\textwidth]{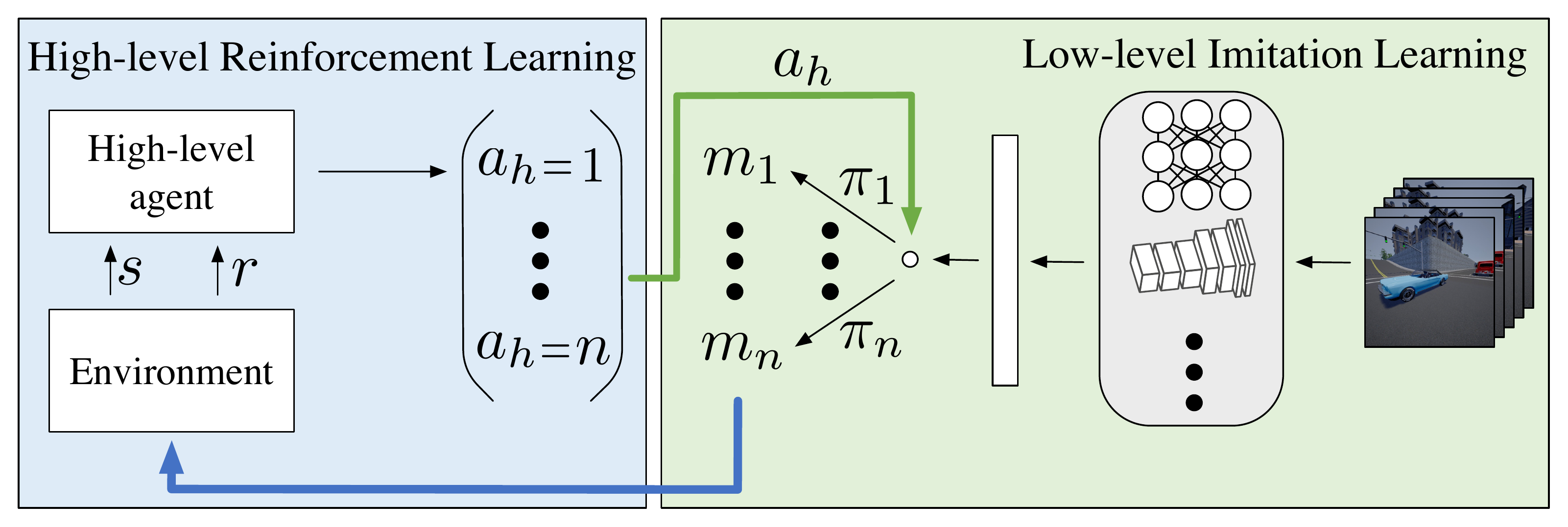}}
\vspace{-20px}
\caption{The left part of the figure is a typical near-accident scenario: The ego car (red car) turns left but the truck occludes the blue car, which causes the ego car to overlook the blue car and collide with it at time step 5. The right part of the figure is the overall architecture of the proposed hierarchical reinforcement learning and imitation learning model. The right green square shows the low-level imitation learning part, where the low-level policies are learned by the conditional imitation framework. All the policies share the same feature extractor and split to different branches in later layers for action prediction, where each corresponds to one mode. The branch is selected by external input $a_h$ from high-level reinforcement learning. The low-level policies are learned from expert demonstrations by imitation learning.  The left blue square shows the high-level reinforcement learning part, where the high-level agent interacts with the environment to learn the high-level policy, which selects the best low-level policy branch through the high-level action $a_h$ at different states.}\label{fig:arch}
\vspace{-15px}
\end{figure*}

Using our proposed approach, the low-level policy for each mode can be efficiently learned with IL even with only a few expert demonstrations, since IL is now learning a much simpler and specific policy by sticking to one driving style with little phase transition. We emphasize that RL would not be a reasonable fit for learning the low-level policies as it is difficult to define the reward function. For example, designing a reward function for the aggressive mode that exactly matches an aggressive human driver's behavior is non-trivial.

For the high-level policy, RL is a better fit since we need to learn to maximize the return based on a reward that contains a trade-off between various terms, such as efficiency and safety. Furthermore, the action space is now reduced from a continuous space to a finite discrete space. IL does not fit to the high-level policy, because it is not natural for human drivers to accurately demonstrate how to switch driving modes.

We therefore combine RL at the high-level and IL at the low-level in our proposed hierarchical model, which can utilize both approaches and learn driving policies in a wide variety of settings, including near-accident driving scenarios.

Our main contributions in this paper are three-fold:
\begin{itemize}
    \item We develop a \textbf{Hierarchical Reinforcement and Imitation Learning (\textsc{H-ReIL})} approach composed of a high-level policy learned with RL, which switches optimally between different modes, and low-level policies learned with IL, which represent driving in different modes. 
    
    \item We demonstrate and assess our proposed \textsc{H-ReIL} model on two different driving simulators in a set of near-accident driving scenarios. Our simulations demonstrate that the learned hierarchical policy outperforms imitation learning policies, the individual policies learned for each mode, and a policy based on random mixtures of modes, in terms of efficiency and safety.
    
    \item We finally conduct a user study in which human subjects compare trajectories generated by \textsc{H-ReIL} and the compared methods to demonstrate \textsc{H-ReIL}'s ability to generate safe and efficient policies. The results show the users significantly prefer the \textsc{H-ReIL} driving policies compared to other methods in near-accident scenarios.
\end{itemize}

%Broadly speaking, given a dataset of observation-action pairs ($(o^i_0, a^i_0, o^i_1, a^i_1, \dots, o^i_{t+1}, a^i_{t+1})$), IL seeks to produce a policy $\pi$ that best fits actions given observations at each time step. Conditional imitation learning (CoIL) extends IL to support the input of high-level commands, such as "turn right", to the policy at test-time (\cite{codevilla2018end}).

%Our idea is to train a conditional imitation learning (CoIL) policy, which improves data-efficiency over independently training IL policies. We then use a higher-level RL policy to select which high-level command to active for the CoIL policy. This is advantageous, because 1) the outcome is a policy that always imitates some human driving, 2) action space of the high-level RL becomes discrete, 3) the reward design becomes easier, because we don't need to worry about driving styles, 4) we can get a good hybrid policy that is both safe and time-efficient.

\section{Related Work}
\noindent \textbf{Rule-based Methods.} Traditional autonomous driving techniques are mostly based on manually designed rules \cite{schwarting2018planning,urmson2008autonomous,montemerlo2008junior}. However, it is tedious, if not impossible, to enumerate all driving rules and norms to deal with all the states. Therefore, rule-based methods often cause the vehicle to drive in an unnatural manner or completely fail in unexpected edge cases.

\noindent \textbf{Imitation Learning (IL).} ALVINN was one of the first instances of IL applied to driving \cite{pomerleau1989alvinn}. Following ALVINN, \citet{muller2006off} solved off-road obstacle avoidance using behavior cloning. IL learns driving policies on datasets consisting of off-policy state-action pairs. However, they suffer from potential generalization problems to new test domains due to the distribution shift. \citet{ross2011no} address this shortcoming by iteratively extending the base dataset with on-policy state-action pairs, while still training the base policy offline with the updated dataset. \citet{bansal2018chauffeurnet} augment expert demonstrations with perturbations and train the IL policy with an additional loss penalizing undesired behavior. Generative Adversarial Imitation Learning \cite{ho2016generative,song2018multi} proposes to match the state-action occupancy between trajectories of the learned policy and the expert demonstrations.

A major shortcoming of IL is that it requires a tremendous amount of expert demonstrations. Conditional imitation learning (CoIL) \cite{codevilla2018end} extends IL with high-level commands and learns a separate IL model for each command. Although it improves data-efficiency, high-level commands are required at test time, e.g., the direction at an intersection. In our setting, each high-level command corresponds to a different driving mode. Instead of depending on drivers to provide commands, we would like to learn the optimal mode-switching policy.

% Instead, this selection policy must also be learned.

\noindent \textbf{Inverse Reinforcement Learning (IRL).} Originally proposed to address the learning problem in a Markov decision process (MDP) without an explicitly given reward function \cite{abbeel2004apprenticeship}, IRL aims to recover the reward function from expert demonstrations. The reward is typically represented by a weighted sum of several reward features relevant to the task. IRL learns those weights by observing how experts perform the task. \citet{abbeel2004apprenticeship} tune the weights to match the expected return of the expert trajectories and the optimal policy. \citet{ziebart2008maximum} further add a maximum entropy regularization. Following \cite{levine2012continuous}, \citet{finn2016guided} improve the optimization in \cite{ziebart2008maximum}.

Similar to IL, IRL also suffers from the requirement of a large amount of expert demonstrations. It is also difficult and tedious to define reward features that accurately characterize efficiency and safety in all scenarios. Thus, IRL is not fit for learning driving policies in near-accident scenarios.
% near-accident scenarios. Therefore IRL does not fit to the problem of learning how to drive in near-accident scenarios.

\noindent \textbf{Reinforcement Learning (RL).} RL has been applied to learn autonomous driving policies \cite{sallab2017deep,chen2019model,youssef2019deep,shalev2016safe}. RL explores the environment to seek the action that maximizes the expected return for each state based on a pre-defined reward function. However, it suffers from the fact that the state-action space for driving is extremely large, which makes it very inefficient to explore. \citet{chen2019model} try to alleviate this problem by augmenting RL with Model Predictive Control (MPC) to optimally control a system while satisfying a set of constraints. \citet{tram2019learning} combine RL with MPC to shrink the action space, however the MPC is based on driving rules, which are difficult to exhaustively define and enumerate. Finally, \citet{gupta2019relay} proposed using RL to fine-tune IL policies for long-horizon, multi-stage tasks, different than our problem setting.

\noindent \textbf{Hierarchical Reinforcement Learning.} Hierarchical RL is motivated by feudal reinforcement learning \cite{dayan1993feudal}, which first proposes a hierarchical structure for RL composing of multiple layers: the higher layer acts as a manager to set a goal for the lower layer, which acts as a worker to satisfy the goal. Hierarchical RL enables efficient exploration for the higher level with a reduced action space, i.e. goal space, while making RL in the lower level easier with an explicit and short-horizon goal. Recent works extended hierarchical RL to solve complex tasks \cite{kulkarni2016hierarchical,vezhnevets2017feudal,stulp2011hierarchical,strudel2019combining,wu2020model}. \citet{le2018hierarchical} proposed a variant of hierarchical RL, which employs IL to learn the high-level policy to leverage expert feedback to explore the goal space more efficiently. Recently, more related to our work, \citet{qureshi2020composing} proposed using deep RL to obtain a mixture between task-agnostic policies. However in our case, low-level policies are not task-agnostic and are produced by IL on the same tasks, so it is arguably sufficient to discretely switch between them. Finally, \citet{nair2018overcoming} use expert demonstrations to guide the exploration of RL. 

However, for near-accident scenarios, most off-the-shelf hierarchical RL techniques do not address the problem of driving safely and efficiently, because it is difficult to define the reward function for low-level RL. We instead construct a hierarchy of RL and IL, where IL is in the low-level to learn a basic policy for each mode and RL is in the high-level, similar to \cite{comanici2010optimal}, to learn a mode switching policy that maximizes the return based on a simpler pre-defined reward function.

%\noindent \textbf{Mixture of Reinforcement Learning and Imitation Learning.} Integrating RL and IL can potentially exploit the advantages of both worlds and make them complementary to each other. \citet{nair2018overcoming} use expert demonstrations to guide the exploration of RL. Hierarchical imitation and reinforcement learning by \citet{le2018hierarchical}, a variant of hierarchical RL, employs IL to learn the high-level policy, which exploits expert feedback to explore the goal space more efficiently.

%In near accident scenarios, however, the state-action space is large and the reward functions for low-level policies are difficult to define, so the low-level is a better fit for IL. The high-level policy needs to deal with highly variant environments, so we employ RL for the high-level. Therefore, we construct a novel mixture of RL and IL, where IL is in low-level of the hierarchy to learn a basic policy for each mode and RL is in the high-level to learn a mode switching policy that maximizes the return based on the pre-defined reward function.

%mixed IL + "intent" inference (Amnon Shashua's TR)

\section{Model}

% RLIL
% RC
% RC-IL
% Imitation controlled through reinforcement
\subsection{Problem Setting}
We model traffic as a partially observable Markov decision process (POMDP): $P_l = \langle \mathcal{S}, \Omega, O, \mathcal{A}, f, R \rangle$ where the agent is the ego car. The scenario terminates either by a collision, by reaching the destination, or by a time-out, which forces the POMDP to be finite horizon. $\mathcal{S}$ is the set of states, $\Omega$ is the set of observations, $O$ is the set of conditional observation probabilities, $\mathcal{A}$ is the set of actions, and $f$ is the transition function. Each state $s^t\in \mathcal{S}$ consists of the positions and velocities of all the vehicles at time step $t$. Each action $a^t \in \mathcal{A}$ is the throttle and the steering control available for the ego car. At each time step $t$, all vehicles move and the state $s^t$ is transitioned to a new state $s^{t+1}$ according to $f$, which we model as a probability distribution, $P(s^{t+1} | s^t,a^t) = f(s^t,a^t,s^{t+1})$, where the stochasticity comes from noise and the uncertainty about the other vehicles' actions. The agent receives an observation $o^t\in \Omega$ with a likelihood conditioned on the state $s^t$, \textit{i.e.} $O(o^t|s^t)$. For example, if some vehicles are occluded behind a building, their information is missing in the observation. Finally, the agent receives a reward $R(s^t,a^t)$ at each time step $t$, which encodes desirable driving behavior.

%RL and IL are two widely-used techniques to learn a policy. However, they become impractical when learning a policy for driving in near-accident scenarios because the state and action spaces of driving are both continuous and too large. Hence, handling rapid phase-transition is very challenging.

\subsection{\textsc{H-ReIL} Framework}
We design the \textsc{H-ReIL} framework using a set of $n$ experts, each representing its own mode of driving. Following different modes, the experts are not necessarily optimal with respect to the true reward function. For example, the modes can be aggressive or timid driving. We denote the corresponding policies by $\pi_1,\dots,\pi_n$; where $\pi_i:\Omega\to\mathcal{A}$, $\forall i$. Our goal is to learn a policy $\Pi$ that switches between the modes to outperform all $\pi_i$ in terms of cumulative reward.

As shown in the right panel of Fig.~\ref{fig:arch}, we divide the problem into two levels where $\pi_i\mid_{i=1}^{n}$ are low-level policies learned with IL using the data coming from experts, and the high-level agent learns $\Pi$ with RL using a simulator of the POMDP. 

%Motivated by hierarchical RL, we divide the problem into two levels where $\pi_i\mid_{i=1}^{n}$ are learned as low-level policies, and the high-level agent aims to learn a meta-policy to switch between different modes that correspond to different low-level policies. While data for each mode come from experts, the experts might be optimizing for different goals, such as arriving as soon as possible (aggressive driving) or avoiding accidents at all costs (timid driving).
%To learn the policies for low-level and high-level agents, as shown in the right panel of Fig.~\ref{fig:arch}, we employ a hierarchical structure of RL and IL, where we learn each low-level policy via imitation learning using the data coming from experts, and learn the high-level policy via reinforcement learning using a simulator of the POMDP.

\noindent\textbf{Low-Level Imitation Learning Policy.} Unlike \cite{kulkarni2016hierarchical} and \cite{le2018hierarchical}, which employ RL in the low-level of the hierarchy, we employ IL to learn low-level policies $\pi_i$, because each low-level policy sticks to one driving style, which behaves relatively consistently across states and requires little rapid phase transitions. Hence, the actions taken in nearby states can generalize to each other easily. Therefore, the simpler policy can be learned by IL easily with only a few expert demonstrations $\mathcal{H}_i=\{o_i^t,a_i^t\}|_{t=1}^{K}$, consisting of observation-action pairs for each mode $m_i$. Here we use Conditional Imitation Learning (CoIL) \cite{codevilla2018end} as our IL method.
%, similar to previous work on modeling driving behavior \cite{bansal2018chauffeurnet}.
We define the loss as
\begin{equation}\label{eqn:IL}
    l_{IL} = \frac{1}{n}\sum_{i=1}^n\frac{1}{K}\sum_{t=1}^{K}\ell_1 (a_i^t, {\pi}_i(o_i^t)),
\end{equation}
where we take the mean over $L1$ distances. As in CoIL, we model ${\pi}_i\mid_{i=1}^n$ using a neural network with branching at the end. Each branch corresponds to an individual policy ${\pi}_i$. We present the details of the networks in Section~\ref{sec:implementation_details}.

\noindent \textbf{High-Level Reinforcement Learning Policy.} After training the low-level policies, we build the high-level part of the hierarchy: We train a high-level policy $\Pi$ to select which of the policies from $S_{\pi} = \{\pi_i\}_{i=1}^n$ the ego car should follow. This high-level decision is made every $t_s$ time steps of $P_l$.

We model this high-level problem as a new POMDP, called $P^{t_s}_h$, where the states and observations are the same as the original POMDP $P_l$, but the actions choose which driving mode to follow. For example, if the action is $2$, then the ego car follows $\pi_2$ for the next $t_s$ time steps in $P_l$, which is a single time step in $P^{t_s}_h$.
%This means, the transitions of $P_h^{t_s}$ are realized by using the policy associated with the chosen mode for $t_s$ time steps on $P_l$.
Formally, $P^{t_s}_h=\langle \mathcal{S}, \Omega, \mathcal{O}, \mathcal{A}_{h}, f^{t_s}_h, R^{t_s}_h \rangle$ and the new action space $\mathcal{A}_{h}$ is a discrete space, $\{1,2,...,n\}$, representing the selection of low-level policies. The new transition function $f^{t_s}_h(s^t,a_h,s^{t+1})$ gives the probability of reaching $s^{t+1}$ from $s^t$ by applying policy $\pi_{a_h}$ for $t_s$ consecutive time steps in $P_l$. Similarly, the new reward function accumulates the reward from $P_l$ over $t_s$ time steps in which the policy $\pi_{a_h}$ is followed.

Then, our goal in this high-level hierarchy is to solve:
\begin{align}
    \argmax_{\Pi}\; & \mathbb{E}\left[\sum_{j} \sum_{o^j} O(o^j\mid s^j)R_h^{t_s}(s^j,\Pi(o^j))\right] \nonumber\\
    \textrm{subject to } & s^{j+1} \sim f_h^{t_s}(s^j,\Pi(o^j),s^{j+1})\textrm{ for }\forall j
    \label{eq:rl_optimization}
\end{align}
where we use indexing $j$ to denote the time steps of $P^{t_s}_h$. As shown in Fig. \ref{fig:arch}, we attempt to solve \eqref{eq:rl_optimization} using RL. In $P_h^{t_s}$, the action space is reduced from continuous to discrete, which eases the efficient exploration of the environment. Furthermore, it is now much easier to define a reward function because the ego car already satisfies some properties by following the policies learned from expert demonstrations. For example, with a high enough $t_s$, we do not need to worry about jerk, because the experts naturally give low-jerk demonstrations. Therefore, we design a simple reward function consisting of the efficiency and safety terms ($R_e$ and $R_s$). $R_e$ is negative in every time step, so that the agent will try to reach its destination as quickly as possible. $R_s$ gets an extremely large negative value if a collision occurs. Otherwise, it is $0$.

Besides, setting $t_s\!>\!1$ reduces the number of time steps in an episode and makes the collision penalty, which appears at most once per episode, less sparse. With the new action space, transitions, and reward function, we can train the high-level policy with any RL algorithm (PPO \cite{schulman2017proximal} in this paper). Algorithm~\ref{algorithm} outlines our training algorithm.

\begin{algorithm}[h]
\SetAlgoLined
\KwIn{Expert demonstrations $\mathcal{H}_1,...,\mathcal{H}_n$, POMDP $P^{t_s}_h=\langle \mathcal{S}, \Omega, \mathcal{O}, \mathcal{A}_{h}, f^{t_s}_h, R^{t_s}_h \rangle$}
\KwOut{Low-level policies $\pi_i\mid_{i=1}^N$, high-level policy $\Pi$}
Train low-level policies $\pi_i\mid_{i=1}^N$ with demonstrations $\mathcal{H}_i\mid_{i=1}^N$ to minimize the loss in Eqn.~\eqref{eqn:IL}.\\
Train high-level policy $\Pi$ using $\pi_i\mid_{i=1}^N$ and $P^{t_s}_h$ according to \eqref{eq:rl_optimization} with PPO.\\
\Return $\pi_i\mid_{i=1}^N$ and $\Pi$
\caption{H-ReIL Training Algorithm}\label{algorithm}
\end{algorithm}
\vspace{-7px}

\subsection{Analysis of H-ReIL}

\begin{proposition} \label{prop:random}
Let's consider a POMDP with a fixed finite horizon $T$, for which we have $n$ low-level policies. Let's call the expected cumulative reward for the optimal and worst high-level control sequences $U^*$ and $U'$, respectively. If there exists a scalar $a>U^*-U'$ such that the expected cumulative rewards of keeping the same low-level policy are smaller than $U^* - a + a/n^T$;
then there exists a probability distribution $p$ such that randomly switching the policies with respect to $p$ is better than keeping any of the $n$ low-level policies.
\end{proposition}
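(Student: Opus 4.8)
The plan is to exhibit one explicit distribution $p$ — the uniform distribution over the $n$ modes — and to show that switching according to it already beats every constant policy, so existence follows without optimizing over $p$. First I would fix notation: index the $n^T$ deterministic high-level control sequences by $\sigma\in\{1,\dots,n\}^T$ (one choice of mode at each of the $T$ high-level steps), and write $U(\sigma)$ for the expected cumulative reward of sequence $\sigma$, where the expectation already absorbs the POMDP stochasticity. Then $U^*=\max_\sigma U(\sigma)$ and $U'=\min_\sigma U(\sigma)$, the ``keep the same policy'' sequences are the constant ones $\sigma_i=(i,\dots,i)$ for $i=1,\dots,n$, and the hypothesis reads $U(\sigma_i)<U^*-a+a/n^T$ for every $i$.

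Next I would read off what the randomized policy earns. Sampling a mode independently from $p$ at each step induces the product measure $\Pr[\sigma]=\prod_{t=1}^{T}p_{\sigma_t}$, so by linearity the randomized policy collects $\bar U(p)=\sum_\sigma\big(\prod_{t}p_{\sigma_t}\big)U(\sigma)$. Taking $p$ uniform makes every sequence equiprobable with weight $1/n^T$, whence $\bar U(\text{unif})=\frac{1}{n^T}\sum_\sigma U(\sigma)$ is simply the flat average of $U$ over all sequences. The key (and only) estimate I need is a crude lower bound on this average: one summand equals $U^*$ and each of the remaining $n^T-1$ summands is at least $U'$, so
\begin{equation*}
\bar U(\text{unif})\;\ge\;\frac{U^*+(n^T-1)U'}{n^T}\;=\;U'+\frac{U^*-U'}{n^T}.
\end{equation*}

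To finish I would compare this bound with the hypothesis. Writing $d=U^*-U'$, the gap between my lower bound and the common upper bound $U^*-a+a/n^T$ on the constant-policy rewards collapses to
\begin{equation*}
\Big(U'+\tfrac{d}{n^T}\Big)-\Big(U^*-a+\tfrac{a}{n^T}\Big)=(a-d)\Big(1-\tfrac{1}{n^T}\Big),
\end{equation*}
which is strictly positive because $a>U^*-U'=d$ by assumption and $n^T>1$. Chaining the inequalities gives $\bar U(\text{unif})\ge U'+d/n^T>U^*-a+a/n^T>\max_i U(\sigma_i)$, so uniform switching strictly dominates every constant policy and the claim holds with $p$ uniform.

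I expect the only real subtlety to be setting up the right object to average over and noticing that this deliberately crude bound — one term $U^*$, all others $U'$ — is exactly strong enough: the hypothesis is engineered so that its $a/n^T$ slack matches the $1/n^T$ weight the optimal sequence contributes under the uniform measure. The condition $a>U^*-U'$ does the essential work of flipping the sign of $(a-d)$ and making the final comparison strict; the rest is bookkeeping. The one point to keep honest is the interpretation of ``randomly switching with respect to $p$'' as independent per-step sampling, which is what makes the uniform choice assign equal mass $1/n^T$ to each sequence.
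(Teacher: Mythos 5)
Your proposal is correct and follows essentially the same route as the paper: take $p$ uniform, note every length-$T$ sequence is realized with probability $1/n^T$, and lower-bound the resulting average by giving the optimal sequence its weight $U^*/n^T$ and bounding the remaining $n^T-1$ sequences from below (you use $U'$ directly; the paper uses $U^*-a<U'$, which is the same argument up to bookkeeping). Your version is marginally more explicit in isolating the positive gap $(a-d)(1-1/n^T)$, but the key idea and the estimate are identical.
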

\begin{proof}
Let $p$ be a uniform distribution among the low-level policies. Then, each possible control sequence has a $1/n^T$ probability of being realized. This guarantees that the expected cumulative reward of this random policy is larger than: $\frac{n^T-1}{n^T}(U^*-a) + \frac{1}{n^T}(U^*) = U^* - a + a/n^T$.

While this is a worst-case bound, it can be shown that the expected cumulative reward of a random policy can be higher if the optimal high-level control sequence is known to be imbalanced between the modes. In that case, a better lower bound for random switching is obtained by a $p$ maximizing the probability of the optimal sequence being realized.
\end{proof}

For a different interpretation of \textsc{H-ReIL}, one can think of the true driving reward $R$ as a sum of $n$ different terms; such as, for $n=2$, $R(s^t,a^t)=R_e(s^t,a^t) + R_s(s^t,a^t)$ where $R_e$ denotes the part of the reward that is more associated with efficiency, and $R_s$ with safety. Then, strictly aggressive drivers optimize for $\alpha R_e(s^t,a^t) + (2-\alpha)R_s(s^t,a^t)$ for some $1<\alpha\leq 2$, whereas strictly timid drivers try to optimize the same reward with $0\leq \alpha <1$. One may then be tempted to think there exists a high-level stationary random switching distribution $p$ that outperforms both the aggressive and timid drivers, because the true reward function is in the convex hull of the individuals' reward functions for each $s^t\in\mathcal{S},a^t\in\mathcal{A}$. However, even with this reward structure and hierarchy, the existence of such a $p$ is not guaranteed without the assumptions of Proposition~\ref{prop:random} (or other assumptions).

\begin{remark}\label{remark:mdp}
With the reward structure that can be factorized such that each mode weighs some terms more than the others and the true reward is always in the convex hull of them, there may not exist a high-level stationary random-switching strategy that outperforms keeping a single low-level policy. 
\end{remark}
\begin{proof}
Consider the $4$-state deterministic MDP with a finite-horizon $T$ shown in Fig.~\ref{fig:remark_mdp}. There are only two actions, represented by solid ($a\!=\!1$) and dashed ($a\!=\!2$) lines. The rewards for each state-action pair are given in a tuple form $r\!=\!(R_e,R_s)$ where the true reward is $R(s^t,a^t)=R_e(s^t,a^t) + R_s(s^t,a^t)$. Consider two modes optimizing $\alpha R_e(s^t,a^t) + (2-\alpha) R_s(s^t,a^t)$, one for $\alpha\!=\!1.8$ and the other for $\alpha\!=\!0.2$. While the former will always take $a\!=\!1$, the latter will keep $a\!=\!2$. Both policies will achieve a true cumulative reward of $0$. Let $t_s=1$. A stationary random switching policy cannot outperform those individual policies, because they will introduce a risk of getting $R=-2$ from $s_2$ and $s_4$. In fact, any such policy that assigns strictly positive probabilities to each action will perform worse than the individual policies. On the other hand, a policy that outperforms the individual policies by optimally switching between the modes exists and achieves $T$ cumulative reward.
\end{proof}

\begin{figure}[t]
    \centering
    \includegraphics[width=0.85\columnwidth]{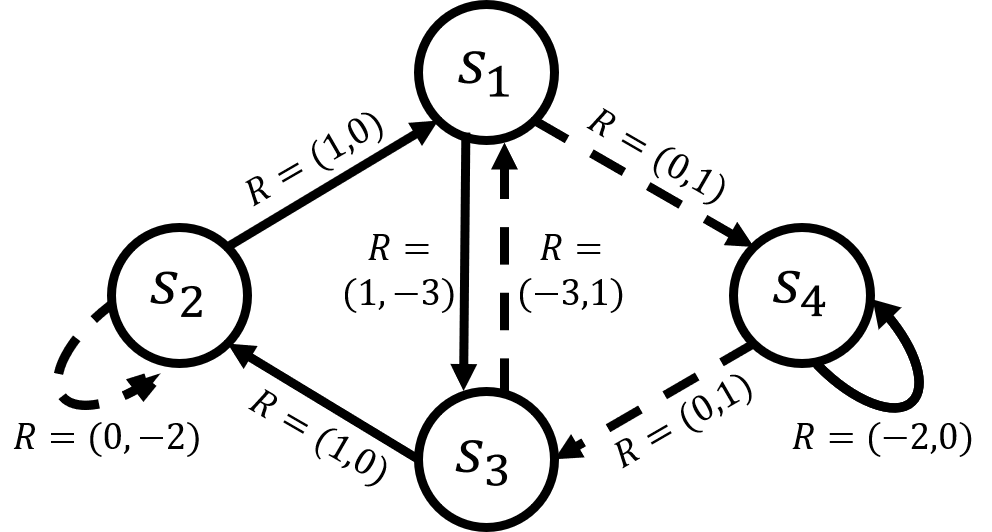}
    \vspace{-10px}
    \caption{While random switching cannot guarantee better performance, an intelligent switching policy outperforms individual low-level policies.}
    \label{fig:remark_mdp}
    \vspace{-15px}
\end{figure}

Unfortunately, the assumptions of Proposition~\ref{prop:random} may not hold for driving in general, and Remark~\ref{remark:mdp} shows that a stationary random switching strategy may perform poorly. Next, we show that the solution to \eqref{eq:rl_optimization} yields a good policy.

\begin{proposition}
The optimal solution to \eqref{eq:rl_optimization} is at least as good as keeping the same low-level policy throughout the episode in terms of the expected cumulative reward.
\end{proposition}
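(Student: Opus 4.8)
The plan is to exploit the fact that keeping a single low-level policy throughout the episode is itself a special case of a high-level policy, and therefore already lives inside the feasible set over which \eqref{eq:rl_optimization} optimizes. Concretely, for each mode $i\in\{1,\dots,n\}$ I would define the constant high-level policy $\Pi_i$ by $\Pi_i(o^j)=i$ for every observation $o^j$, i.e.\ the policy that selects mode $i$ at every high-level decision step. The whole argument then reduces to showing that this constant policy is feasible and reproduces the single-mode behavior, after which optimality does the rest.

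The first substantive step is to argue that following $\Pi_i$ in $P_h^{t_s}$ reproduces exactly the behavior of committing to $\pi_i$ for the entire episode in $P_l$. This is immediate from the construction of the high-level POMDP: by definition $f_h^{t_s}(s^j,i,s^{j+1})$ is the probability of reaching $s^{j+1}$ after applying $\pi_i$ for $t_s$ consecutive steps of $P_l$, and $R_h^{t_s}(s^j,i)$ accumulates the $P_l$-reward over those same $t_s$ steps under $\pi_i$. Chaining these together over all high-level steps $j$ shows that the induced state-trajectory distribution and the expected cumulative reward of $\Pi_i$ in $P_h^{t_s}$ coincide with those of running $\pi_i$ continuously in $P_l$. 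Hence the quantity ``expected cumulative reward of keeping $\pi_i$'' equals the objective value of \eqref{eq:rl_optimization} evaluated at $\Pi=\Pi_i$.

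The second step is the optimality argument. Each $\Pi_i$ trivially respects the constraint $s^{j+1}\sim f_h^{t_s}(s^j,\Pi(o^j),s^{j+1})$, so it is a feasible choice of high-level policy. Since the optimal solution $\Pi^*$ maximizes the objective over all feasible $\Pi$, the objective value at $\Pi^*$ is at least the objective value at $\Pi_i$ for every $i$. Combined with the first step, this gives that $\Pi^*$ attains an expected cumulative reward no smaller than that of any single-mode strategy, and in particular no smaller than $\max_i$ over them, which is precisely the claim.

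The argument is essentially a ``the maximum over a superset dominates'' observation, so I do not expect any genuinely hard computation. The one place I would be careful is the equivalence in the first step when the horizon $T$ is not an exact multiple of $t_s$: the final high-level decision then governs fewer than $t_s$ low-level steps, and I would note that $f_h^{t_s}$ and $R_h^{t_s}$ must be read as truncating at the horizon, so that $\Pi_i$ still corresponds exactly to running $\pi_i$ until termination. Once that bookkeeping is settled, the containment of the constant policies in the feasible set closes the proof.
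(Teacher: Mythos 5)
Your proposal is correct and is essentially the same argument as the paper's own proof: the constant high-level policy $\Pi(o^j)=i$ for all $o^j$ is feasible for \eqref{eq:rl_optimization} and reproduces the single-mode behavior, so the optimum dominates it. Your additional bookkeeping about the induced trajectory distribution and horizon truncation only makes explicit what the paper leaves implicit.
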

\begin{proof}
Since $\Pi(o^j)=i$ for $\forall o^j\in\Omega$ for any $i$ is a feasible solution to \eqref{eq:rl_optimization}, the optimal solution is guaranteed to be at least as good as keeping the same low-level policy in terms of the objective, i.e. the expected cumulative reward.
\end{proof}

In \textsc{H-ReIL}, we decompose the complicated task of driving in near-accident scenarios into two levels, where the low-level learns basic policies with IL to realize relatively easier goals, and the high-level learns a meta-policy using RL to switch between different low-level policies to maximize the cumulative reward. \textbf{The mode switching can model rapid phase transitions. With the reduced action space and fewer time steps, the high-level RL can explore all the states efficiently to address state coverage.} The two-level architecture makes both IL and RL much easier, and learns a policy to drive efficiently and safely in near-accident scenarios.

\section{Experiments}

A video giving an overview of our experiments, as well as the proposed framework, is at \url{https://youtu.be/CY24zlC_HdI}. Below, we describe our experiment settings.

\begin{figure*}[th]
    \centering
    \includegraphics[width=.92\textwidth]{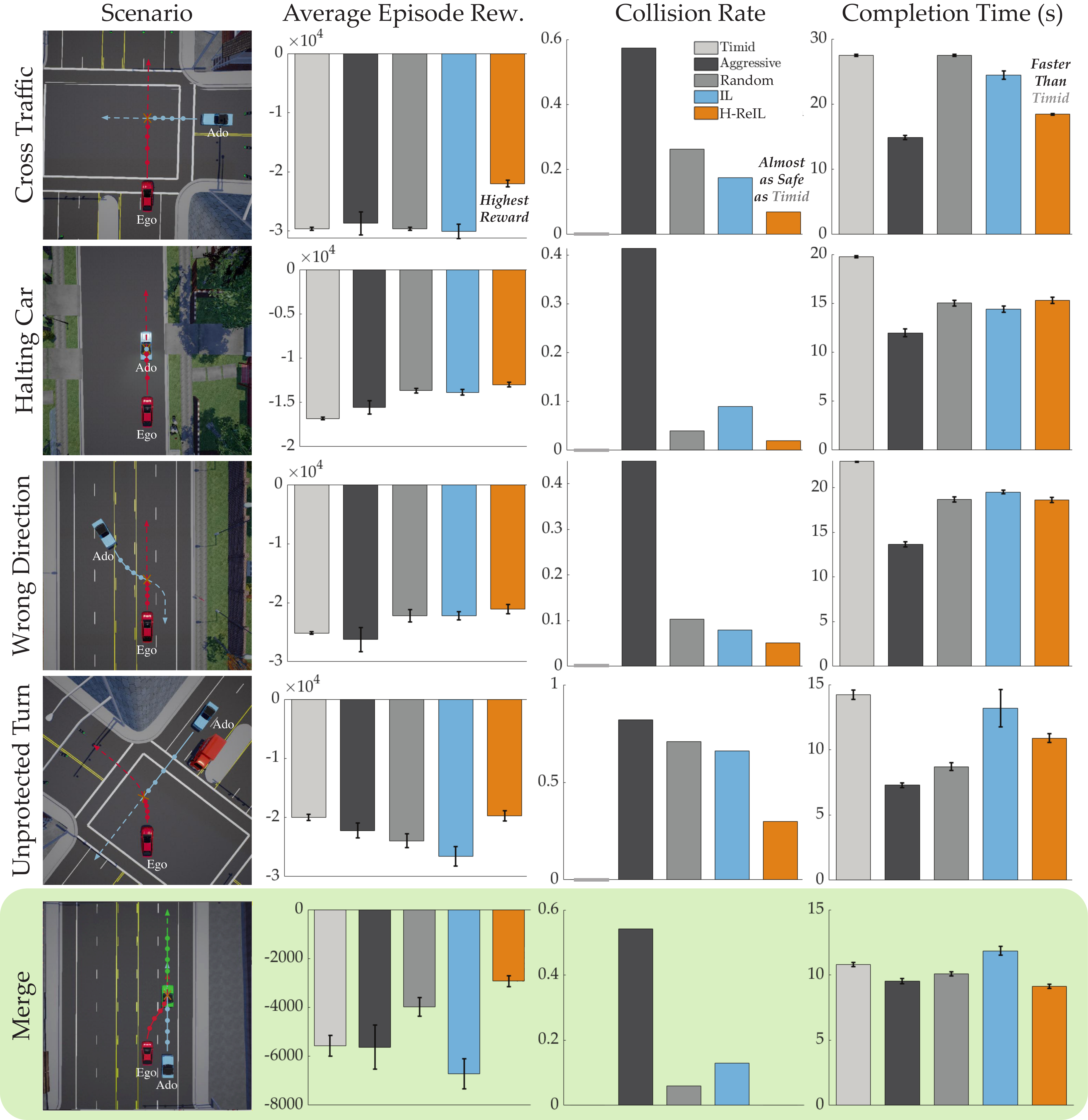}
    \vspace{-8px}
    \caption{The scenario illustration, average episode reward, collision rate, completion time for each scenario and each policy in CARLA simulator. In the scenario visualizations, the ego car is always red and the ado car is blue.}
    \label{fig:bar_plot}
    \vspace{-15px}
\end{figure*}

\subsection{Environment}
We consider the environment where the ego car navigates in the presence of an ado car. The framework extends to cases with multiple environment cars easily. In order to model near-accident scenarios, we let the ado car employ a policy to increase the possibility of collision with the ego car.

\subsection{Scenarios}
We design five near-accident scenarios, each of which is visualized in Fig.~\ref{fig:bar_plot} and described subsequently.

\noindent \textbf{1) Cross Traffic.} The ego car seeks to cross the intersection, but a building occludes the ado car (Fig.~\ref{fig:bar_plot}, row 1).

\noindent \textbf{2) Halting Car.} The ego car drives behind the ado car, which occasionally stops abruptly (Fig.~\ref{fig:bar_plot}, row 2).

\noindent \textbf{3) Wrong Direction.} The ado car, which drives in the opposite direction, cuts into the ego car's lane (Fig.~\ref{fig:bar_plot}, row 3).

\noindent \textbf{4) Unprotected Turn.} The ego car seeks to make a left turn, but a truck occludes the oncoming ado car (Fig.~\ref{fig:bar_plot}, row 4).

\noindent \textbf{5) Merge.} The ego car wants to cut between the ado car and another car in the front, who follows a fixed driving policy. However, the ado car can aggressively accelerate to prevent it from merging (Fig.~\ref{fig:bar_plot}, row 5). 

For each scenario, we have two settings: difficult and easy. The difficult setting is described above where the ado car acts carelessly or aggressively, and is likely to collide with the ego car. The easy setting either completely removes the ado car from the environment or makes it impossible to collide with the ego car. In simulation, we sample between these two settings uniformly at random for each scenario. In addition, we also perturb the initial positions of both cars with some uniform random noise in their nominal directions.

\subsection{Simulators}
\textbf{CARLO}\footnote{Publicly available at \url{https://github.com/Stanford-ILIAD/CARLO}.} is our in-home 2D driving simulator that models the physics and handles the visualizations in a simplistic way (see Fig.~\ref{fig:trajectory}). Assuming point-mass dynamics model as in \cite{sadigh2017active}, CARLO simulates vehicles, buildings and pedestrians.

While CARLO does not provide realistic visualizations other than two-dimensional diagrams, it is useful for developing control models and collecting large amounts of data. Therefore, we use CARLO as a simpler environment where we assume perception is handled, and so we can directly use the noisy measurements of other vehicles' speeds and positions (if not occluded) in addition to the state of the ego vehicle.

\textbf{CARLA} \cite{dosovitskiy2017carla} is an open-source simulator for autonomous driving research, which provides realistic urban environments for training and validation of autonomous driving systems. Specifically, CARLA enables users to create various digital assets (pedestrians, buildings, vehicles) and specifies sensor suites and environmental conditions flexibly. We use CARLA as a more realistic simulator than CARLO.

For both CARLO and CARLA, the control inputs for the vehicles are throttle/brake and steering.

\begin{figure*}[th]
    \centering
    \includegraphics[width=\textwidth]{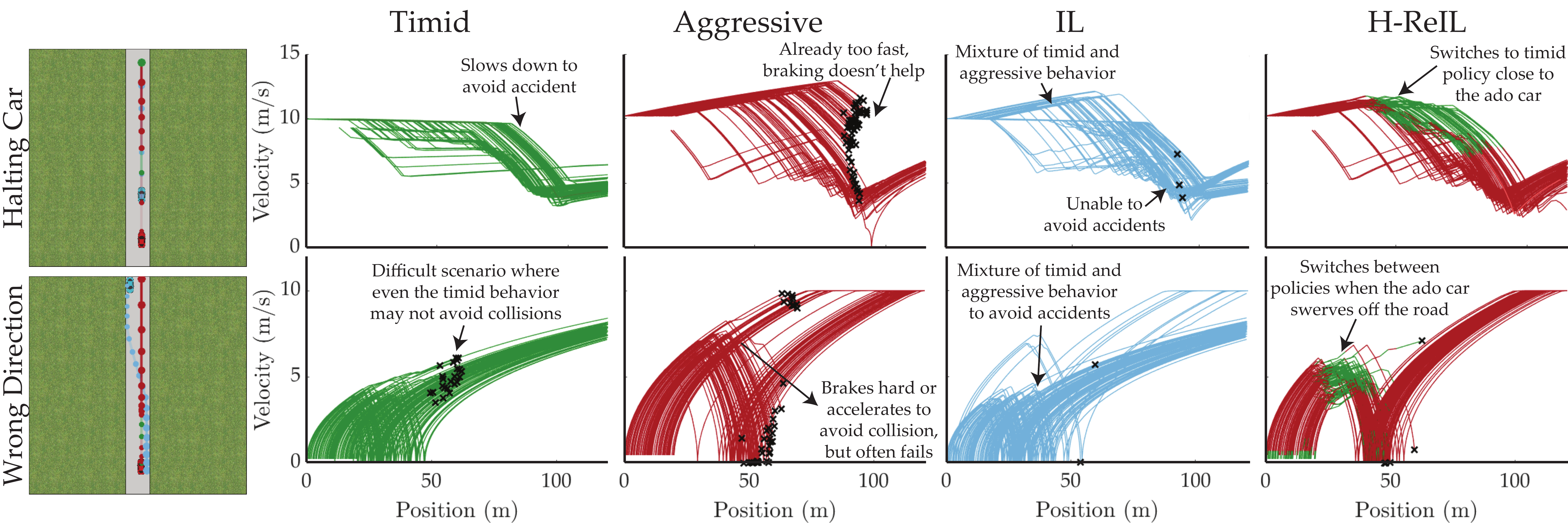}
    \vspace{-21px}
    \caption{The plots of velocity vs position of the ego car under Halting Car and Wrong Direction scenarios with \textsc{Timid}, \textsc{Aggressive}, \textsc{IL} and \textsc{H-ReIL} in CARLO. The green and red colors correspond to the selections of timid and aggressive modes, respectively. The black crosses show collisions where the episode terminates. The episode also terminates when the ego car arrives at the predefined destinations.}
    \label{fig:line_plot}
    \vspace{-18px}
\end{figure*}

\subsection{Modes}
While \textsc{H-ReIL} can be used with any finite number of modes, we consider two in this paper ($n=2$): aggressive and timid modes. In the former, the ego car favors efficiency over safety: It drives fast and frequently collides with the ado car. In the timid mode, the ego car drives in a safe way to avoid all potential accidents: It slows down whenever there is even a slight risk of an accident. The high-level agent learns to switch between the two modes to achieve our final goal: \emph{driving safely and efficiently in near-accident scenarios}.

For the near-accident driving setting, having two modes of driving -- aggressive and timid -- is arguably the most natural and realistic choice. Since humans often do not optimize for other nuanced metrics, such as comfort, in a near-accident scenario and the planning horizon of our high-level controller is extremely short, there is a limited amount of diversity that different modes of driving would provide, which makes having extra modes unrealistic and unnecessary in our setting. 

For our simulations on the first four scenarios (other than Merge), we collect data from the hand-coded aggressive and timid modes for the ego car based on rules around the positions and velocities of the vehicles involved. While both modes try to avoid accidents and reach destinations; their reaction times, acceleration rates and willingness to take risks differ.

For the Merge scenario, we collected real driving data from a driver
who tried to drive either aggressively or timidly. We collected human data only in CARLA due to its more realistic visualizations and dynamics model.

In each of the first four scenarios, we separately collect aggressive and timid driving data as expert demonstrations for the aggressive and timid modes, denoted by $\mathcal{H}_\mathrm{agg}$ and $\mathcal{H}_\mathrm{tim}$, respectively. In CARLO, which enables fast data collection, we collected $80000$ episodes per mode. In CARLA, which includes perception data, we collected $100$ episodes per mode.

\subsection{Compared Methods}
We compare \textsc{H-ReIL} with the following policies:
\begin{enumerate}
    \item \textsc{IL}. $\pi^\mathrm{IL}$ trained on the mixture of aggressive and timid demonstrations $\mathcal{H}_\mathrm{agg}$ and $\mathcal{H}_\mathrm{tim}$.
    \item \textsc{Aggressive}. $\pi^\mathrm{agg}$ trained only on $\mathcal{H}_\mathrm{agg}$ with IL.
    \item \textsc{Timid}. $\pi^\mathrm{tim}$ trained only on $\mathcal{H}_\mathrm{tim}$ with IL.
    \item \textsc{Random}. $\Pi^\mathrm{rand}$ which selects $\pi^\mathrm{agg}$ or $\pi^\mathrm{tim}$ at every high-level time step uniformly at random.
\end{enumerate}

\subsection{Implementation Details}\label{sec:implementation_details}
\noindent\textbf{CARLO.} The observations include ego car location and velocity. They also include the location and the velocity of the ado car, if not occluded, perturbed with Gaussian noise.

These are then fed into a neural network policy with two fully-connected hidden layers to output the high-level decision. The same information are also fed into a neural network with only a single fully-connected hidden layer to obtain features. Depending on the high-level mode selection, these features are then passed into another fully connected neural network with a single hidden layer, which outputs the controls.

\noindent\textbf{CARLA.} The observations consist of ego car location, velocity and a front-view image for the first four scenarios. Merge scenario has additional right-front and right-view images to gain necessary information specific to the scenario. 

For the first four scenarios, we use an object detection model, Faster-RCNN with R-50-FPN backbone \cite{ren2015faster}, to detect the cars in the front-view images and generate a black image with only the bounding boxes colored white, which we call the detection image. It provides information of the ado car more clearly and alleviates the environmental noise. We do not apply this technique to the Merge scenario because the ado car usually drives in parallel with the ego car and its shape is only partially observable in some views. Instead, we use the original RGB images for the Merge scenario.

We then design another network consisting of a convolutional neural network encoder and a fully-connected network encoder. The convolutional encoder encodes the detection image and the fully-connected encoder encodes the location and velocity information (of the ego car) into features.

The high-level RL policy feeds these features into a fully-connected network to output which mode the ego car will follow. We then feed the features to the chosen low-level IL policy composed of fully-connected layers, at the next $t_s$ low-level time steps to obtain the controls. We use Proximal Policy Optimization (PPO) \cite{schulman2017proximal} for the high-level agent of \textsc{H-ReIL}.

For $\textsc{IL}$, we use a network structure similar to our approach but without branching  since there is no mode selection.

\begin{figure*}[th]
    \centering
    \includegraphics[width=.95\textwidth]{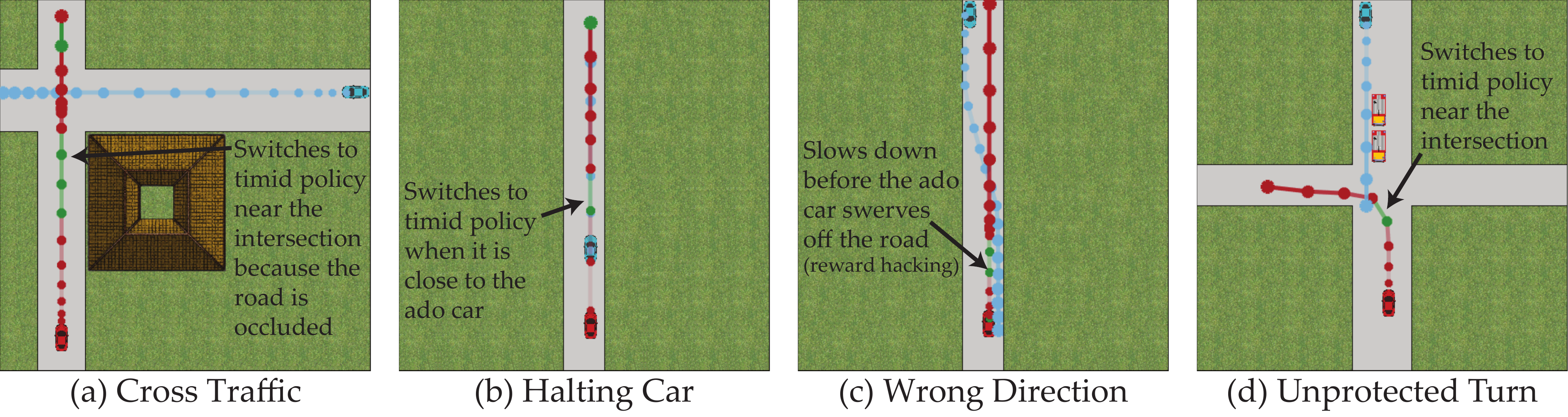}
    \vspace{-10px}
    \caption{Visualization of locations at each time step of the ego car and the ado car in CARLO simulator. The blue color shows trajectory of the ado car. Green means selecting the timid policy while red means selecting the aggressive policy.}
    \label{fig:trajectory}
    \vspace{-15px}
\end{figure*}

\section{Results}
\subsection{Simulations}
We compare the \emph{average episode reward}, \emph{collision rate}, and \emph{completion time} of different methods under all scenarios with both simulators. We compute these metrics for each model and scenario averaged over 100 test runs.

For the simple reward of the high-level agent, we select the trade-off between efficiency (time/distance penalty) and safety (collision penalty) such that the high-level policy cannot na\"ively bias to a single low-level policy. The collision rate is only computed for the episodes with the difficult setting.

As shown in Fig.~\ref{fig:bar_plot} for CARLA, our \textsc{H-ReIL} framework is better than or comparable to other methods in terms of the average episode reward under all five scenarios, which demonstrates the high-level RL agent can effectively learn a smart switching between low-level policies. \textsc{H-ReIL} framework usually outperforms \textsc{IL} with a large margin, supporting the claim that in near-accident scenarios, training a generalizable IL policy requires a lot of demonstrations. \textit{Inadequate demonstrations cause the IL policy to fail in several scenarios}. 

In terms of collision rate and completion time, \textsc{H-ReIL} achieves a collision rate lower than \textsc{IL}, \textsc{Aggressive} and \textsc{Random} while comparable to \textsc{Timid}. \textsc{H-ReIL} also achieves a completion time shorter than \textsc{IL} and \textsc{Timid} while comparable to \textsc{Random}. These demonstrate \textsc{H-ReIL} achieves a good trade-off between efficiency and safety.

\subsection{User Studies}
Having collected real driving data in CARLA for the Merge scenario, we generated a test set that consists of $18$ trajectories for each of \textsc{Aggressive}, \textsc{Timid}, \textsc{IL} and \textsc{H-ReIL}. We then recruited $49$ subjects through Amazon Mechanical Turk to evaluate how good the driving is on a $7$-point Likert scale (1 - least preferred, 7 - most preferred). Figure~\ref{fig:user_study_results} shows the users prefer \textsc{H-ReIL} over the other methods. The differences between \textsc{H-ReIL} and the other methods are statistically significant with $p<0.005$ (two-sample t-test).

\begin{figure}[ht]
    \centering
    \includegraphics[width=\columnwidth]{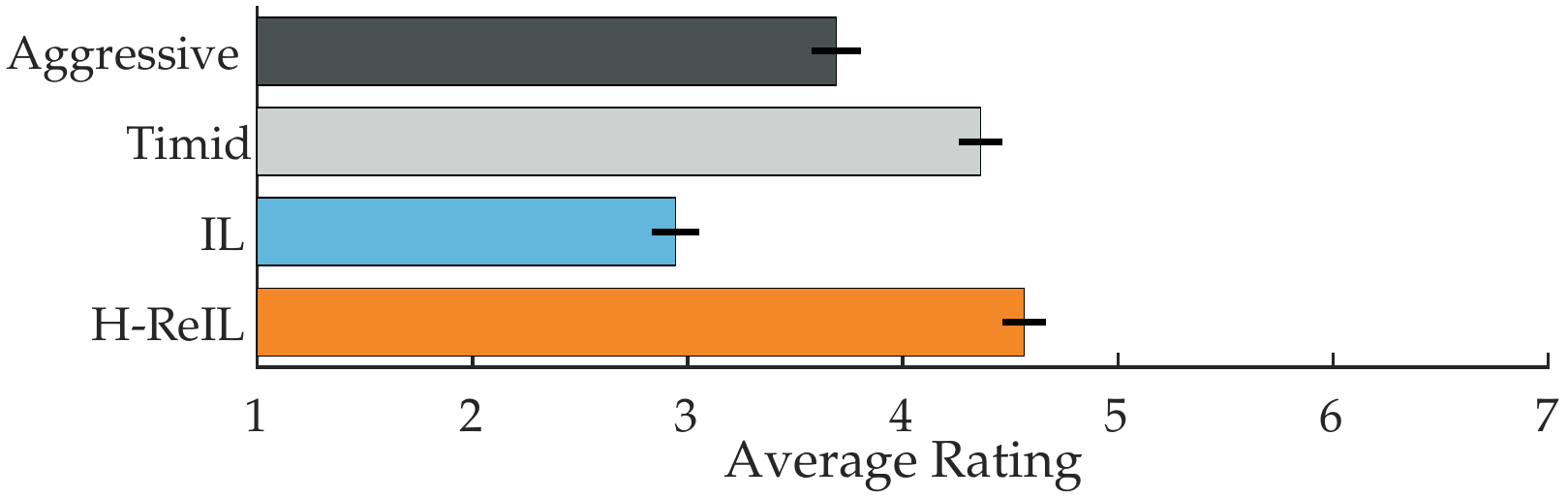}
    \vspace{-20px}
    \caption{User study results are shown. Users rate \textsc{H-ReIL} significantly higher than the other methods ($p<0.005$).}
    \label{fig:user_study_results}
    \vspace{-15px}
\end{figure}

\section{Analysis}
\noindent\textbf{Velocity Analysis.}
We visualize the relation between the velocity and the position of the ego car in its nominal direction in Fig.~\ref{fig:line_plot} for the Halting Car and the Wrong Direction scenarios in CARLO. We selected these two scenarios for visualization as the ego does not change direction.

We observe \textsc{Timid} always drives with a relatively low speed while \textsc{Aggressive} drives fast but collides with the ado car more often. Compared with these two, \textsc{H-ReIL} and \textsc{IL} drive at a medium speed while \textsc{H-ReIL} achieves a relatively higher speed than \textsc{IL} with comparable number of accidents. 

In particular, there is an obvious phase transition in both scenarios (about $[35, 75]$ for the Halting Car and $[25, 45]$ for the Wrong Direction) where a collision is very likely to occur. Baseline models learned by plain IL, cannot model such phase transitions well. Instead, \textsc{H-ReIL} switches the modes to model such phase transitions: it selects the timid mode in the risky states to ensure safety while selecting the aggressive policy in other states to maximize efficiency. This intelligent mode switching enables \textsc{H-ReIL} to drive reasonably under different situations: slowly and cautiously under uncertainty, and fast when there is no potential risk.

\noindent\textbf{Policy Visualization.}
We visualize the locations of the cars in Fig.~\ref{fig:trajectory} in CARLO. We observe that \textsc{H-ReIL} usually chooses the timid policy at the areas that have a collision risk while staying aggressive at other locations when it is safe to do so. These support that our high-level policy makes correct decisions under different situations.

\begin{figure}[ht]
    \centering
    \vspace{-3px}
    \includegraphics[width=0.95\columnwidth]{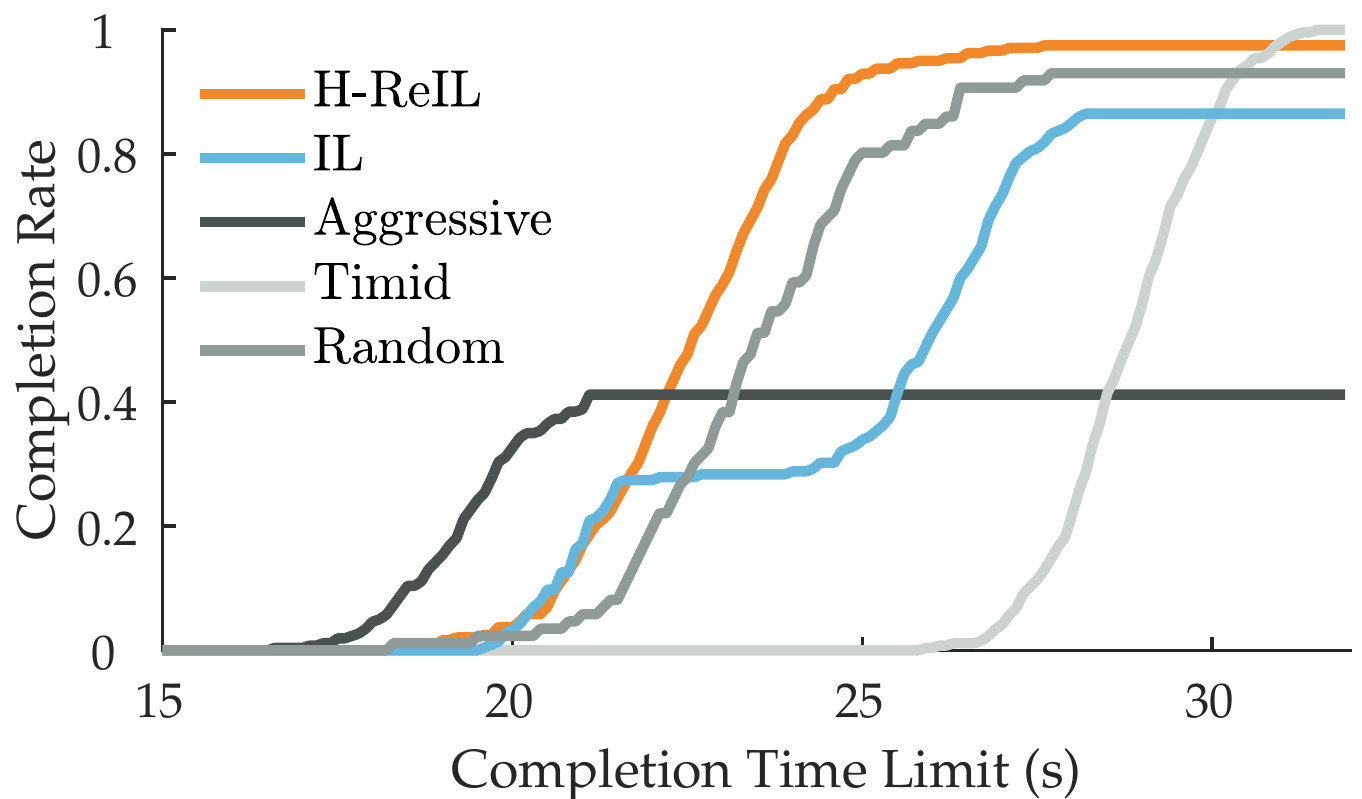}
    \vspace{-8px}
    \caption{The completion rate with varying time limits. The completion rate is the proportion of the trajectories in which the ego car safely reaches the destination within the time limit.}
    \label{fig:frontier}
    \vspace{-7px}
\end{figure}

\noindent\textbf{Completion within Time Limit.} We plot the completion rate with respect to varying time limits for the ego car in Fig.~\ref{fig:frontier} in CARLA for the Cross Traffic scenario. The completion rate is the portion within $500$ runs that the ego car reaches the destination within the time limit. Overall, we observe \textsc{H-ReIL} achieves the best trade-off. While \textsc{Aggressive} achieves higher completion rates for the low time limits, it cannot improve further with the increasing limit with collisions.

We also observe the trajectories of \textsc{IL} are divided into two clusters. The group that achieves lower time limit (20-22s) imitates the aggressive policy more but has lower completion rate. The other group that corresponds to the higher time limit (25-28s) imitates the timid policy more but has better completion rate. This demonstrates \textsc{IL} directly imitates the two modes and learns a mild aggressive or a mild timid policy while it does not learn when to use each mode. On the other hand, \textsc{H-ReIL} consistently achieves higher or comparable completion rate than \textsc{IL} and \textsc{Random}, showing that our high-level RL agent can learn when to switch between the modes to safely arrive at the destination efficiently.

\section{Conclusion}
\noindent\textbf{Summary.} In this work, we proposed a novel hierarchy with reinforcement learning and imitation learning to achieve safe and efficient driving in near-accident scenarios. By learning low-level policies using IL from drivers with different characteristics, such as different aggressiveness levels, and training a high-level RL policy that makes the decision of which low-level policy to use, our method \textsc{H-ReIL} achieves a good trade-off between safety and efficiency. Simulations and user studies show it is preferred over the compared methods.

\noindent\textbf{Limitations and Future Work.} Although \textsc{H-ReIL} is generalizable to any finite number of modes, we only considered $n\!=\!2$. Having more than $2$ modes, for which our preliminary experiments have given positive results, can be useful for other robotic tasks. Also, we hand-designed the near-accident scenarios in this work. Generating them automatically as in \cite{o2018scalable} could enable broader evaluation in realistic scenarios. 

\section*{Acknowledgments}
The authors thank Derek Phillips for the help with CARLA simulator, Wentao Zhong and Jiaqiao Zhang for additional experiments with \textsc{H-ReIL}, and acknowledge funding by FLI grant RFP2-000. Toyota Research Institute (``TRI") provided funds to assist the authors with their research but this article solely reflects the opinions and conclusions of its authors and not TRI or any other Toyota entity.

%\bibliographystyle{plainnat}
%\bibliography{refs}
%\AtNextBibliography{\small}
\bibliographystyle{unsrtnat}
\small\bibliography{refs}
%\printbibliography

\end{document}